\newtheorem{theorem}{Theorem}
\newtheorem{proposition}[theorem]{Proposition}
\newenvironment{remark}[1][Remark]{\begin{trivlist}
\item[\hskip \labelsep {\bfseries #1}]}{\end{trivlist}}
  \def\bi{\begin{itemize}}
\def\ei{\end{itemize}}
\def\be{\begin{enumerate}}
\def\ee{\end{enumerate}}
\def\0{\mbox{\boldmath $0$}}
\def\1{\mbox{\boldmath $1$}}
\def\r{r}
\def\O{\mathcal{O}}
\begin{document}
\title{Inverse Reinforcement Learning with Gaussian Process}

\author{Qifeng Qiao and Peter A. Beling \\
Department of Systems and Information Engineering\\
University of Virginia\\
Charlottesville, Virginia 22904\\
Email: qq2r, pb3a@virginia.edu}

\maketitle

\begin{abstract}
We present new algorithms for inverse reinforcement learning (IRL, or inverse optimal control) in convex optimization settings. We argue that finite-space IRL can be posed as a convex quadratic program under a Bayesian inference framework with the objective of maximum a posterior estimation.  To deal with problems in large or even infinite state space, we propose a Gaussian process model and use preference graphs to represent observations of decision trajectories.  Our method is distinguished from other approaches to IRL in that it makes no assumptions about the form of the reward function and yet it retains the promise of computationally manageable implementations for potential real-world applications.  In comparison with an establish algorithm on small-scale numerical problems, our method demonstrated better accuracy in apprenticeship learning and a more robust dependence on the number of observations.

\end{abstract}

\section{Introduction}
\label{introduction}
Imitation learning is a subfield of machine learning in which the objective is to learn to mimic human behavior solely through observation of the actions taken by the subject.  Technical approaches to imitation learning generally fall into two broad categories \cite{ratliff2009}.  One category contains behavioral cloning approaches that attempt to use supervised learning to predict actions directly from observations of features of the environment. The other category consists of IRL approaches, first introduced in \cite{andrew2000}, use training examples in the form of decision trajectories defined in terms of a Markov decision process (MDP) model of the underlying sequential decision task. IRL algorithms attempt to discover the reward function for the MDP solely on the basis of observations of a decision-maker's solution to that problem.  This approach is appealing  because knowledge of the reward function offers the promise that behavior can be predicted in domains unseen during the period of observation.   

A variety of approaches have been proposed for IRL. In early work, Ng and Russel \cite{andrew2000} advance the key idea  of choosing the reward function to maximize the difference between the optimal and suboptimal policies, under the assumption that the reward function can be approximated by a linear combination of basis functions. A principal motivation for considering IRL problems is the idea of apprenticeship learning, in which observations of state-action pairs are used to learn the policies followed by experts for the purpose of mimicking or cloning behavior.  By its nature, apprenticeship learning problems arise in situations where it is not possible or desirable to observe all state-action pairs for the decision maker's policy.    In recent approaches to apprenticeship learning,  partial policy observation is dealt with by searching mixed solutions in a  space of learned policies with the goal that the accumulative feature expectation is near that of the expert  \cite{abbeel2004, umar2008a}. In such approaches, the reward function is approximated by a linear combination of features, which in turn allows for linear approximation of value functions with consequent simplification of the learning problem.    In such methods, algorithm performance is strongly influenced by the modeler's choice of features.  Another algorithm for IRL is policy matching in which the loss function penalizing deviations from expert's policy is minimized by tuning the parameters of reward functions \cite{neu2007}.

The assumption that the reward function can be linearly approximated, which underlies a number of IRL approaches, may not be  reasonable for many problems of practical interest.  The
 ill-posed nature of the inverse learning problem also presents difficulties.  Multiple reward functions may yield the same optimal policy, and there may be multiple observations at a state given the true reward function.  To deal with these problems, we design algorithms that do not assume linear structure for reward function, but yet remain computationally efficient. In particular, we propose new IRL models and algorithms that assign a Gaussian prior on the reward function or treat the reward function as a Gaussian process.   This approach is similar in perspective to that Ramachandran and Eyal  \cite{deepak2007}, who view the state-action samples from the expert as the evidence that will be used to update a prior on the reward function, under a Bayesian framework.  Other approaches to IRL include game-theoretic methods \cite{umar2008b}  and algorithms derived from linearly-solvable stochastic optimal control \cite{dvijotham2010}.

The main contributions of our work are as follows. First, we model the reward function in a finite state space using a Bayesian framework with known Gaussian priors. We show that this problem is a convex quadratic program, and hence that it can be efficiently solved. Second, for the general case that allows noisy observation of incomplete policies, representation of the reward function is challenging and requires more computation. We show that Gaussian process is appropriate in that case. Our model constructs a preference graph in action space to represent the multiple observations at a state.  Even in cases where the state space is much larger than the number of observations, IRL via Gaussian processes has the promise of offering robust predictions and results that are relatively insensitive to number of observations.
 
It is worth mentioning here that the preference graph we use in IRL is based on an understanding of the agent's preferences over action space.  In the machine learning literature, there has been study of a learning scenario called learning label preference that focuses on finding the latent function that predicts preference relations among a finite set of labels. This scenario is a generalization of some standard problems, such as classification and label ranking \cite{Furnkranz2005}. Considering the latent function values as a Gaussian process, Chu and Ghahramani \cite{chu2005} observed that Bayesian framework is an efficient and competitive method for learning label preferences, and  they proposed a novel likelihood function to capture preference relations and the use of a Gaussian process model for learning label  preferences.  We also use Bayesian inference and build off several of the ideas in \cite{chu2005} and related work, but our method differs from label preference learning for classification and label ranking. Our input data depends on states and actions in the context of an MDP. Moreover, we are learning the reward that indirectly determines how actions are chosen during the sequential evolution of an MDP, while preference learning studies the latent functions preserving preferences. 

The rest of this paper is organized as follows: In Section \ref{section:preliminaries}, we introduce IRL preliminaries. In Sections \ref{section:convex} and  \ref{section:gpmodel}, we propose our principal models and algorithms. In Section \ref{section:experiments}, we describe the results of two small-scale numerical experiments.  Finally,  in Section \ref{section:conclusion}, we offer some concluding remarks.

\section{Preliminaries}
\label{section:preliminaries}

A finite-state, infinite horizon {\em Markov decision process (MDP)} is defined as a tuple $\textsl{M}=(\mathcal{S},\mathcal{A},\mathcal{P},\gamma, \r)$, where 
$\mathcal{S} = \left\{s_{1}, s_{2},\cdots, s_{n}\right\}$ is a set of $n$ states; $\mathcal{A}=\left\{a_{1}, a_{2},\cdots, a_{m}\right\}$ is a set of $m$ actions; $\mathcal{P}=\left\{P_{a_{j}}\right\}^{m}_{j=1}$ is a set of state transition probabilities; $\gamma$ is a discount factor; and $r$ is the reward function which can be written as $r(s,a)$, if we define it as depending on state $s$ and action $a$. For any $a\in A$ and $P_{a}$ is a $n\times n$ matrix, each row of which, denoted as $P_{as}$, is the transition probabilities upon taking action $a$ in state $s$.

Consider a decision maker who selects actions according to a {\em policy} $\pi: S\rightarrow A$ that maps states to actions.  Define the {\em value function} at state $s$ with respect to policy $\pi$ to be $V^{\pi}(s)=E[\sum_{t=0}^{\infty}\gamma^{t}r(s^{t},\pi(s^{t}))|\pi]$, where the expectation is over the distribution of the state sequence $\left\{s^{0}, s^{1}, \dots \right\}$ given policy $\pi$, where superscripts index time.   A decision maker who aims to maximize expected reward will, at every state $s$, choose the action that maximizes $V^{\pi}(s)$. Similarly, define the  $Q$-factor for state $s$ and action $a$ under policy $\pi$, $Q^{\pi}(s,a)$, to be the expected return from state $s$, taking action $a$ and thereafter following policy $\pi$. 
Given a policy $\pi$, $\forall s\in S, a\in \mathcal{A}$, $V^{\pi}(s)$ and $Q^{\pi}(s,a)$ satisfy
\begin{eqnarray}
V^{\pi}(s) &=&  r(s,\pi(s)) +\gamma\sum_{s'}P_{\pi(s)s}(s')V^{\pi}(s') \nonumber\\
\ Q^{\pi}(s,a) &=& r(s,a) + \gamma \sum_{s'}P_{as}(s')V^{\pi}(s')\nonumber
\label{bellman}
\end{eqnarray}
The well-known Bellman optimality conditions state that $\pi$ is optimal if and only if, $\forall s\in S$, we have $ \pi(s)\in {\rm arg} \max_{a\in \mathcal{A}}Q^{\pi}(s,a)$ \cite{bellman1957}.

Given an MDP $M=(\mathcal{S},\mathcal{A},\mathcal{P},\gamma,\ r)$, let us define the {\em inverse Markov decision process (IMDP)} $M_{I} =(\mathcal{S},\mathcal{A},\mathcal{P},\gamma, \mathcal{O})$.  The process $M_I$ includes the states, actions, and dynamics of $M$, but lacks a specification of the reward vector, $\r$.   By way of compensation, $M_I$ includes a set of observations $\mathcal{O}$ that consists of state-action pairs generated through the observation of a decision maker.   We can define the {\em inverse reinforcement learning} (IRL) problem associated with $M_I=(\mathcal{S},\mathcal{A},\mathcal{P},\gamma, \mathcal{O})$ to be that of finding the reward function $r$ such that the observations $\O$ could have come from an optimal policy for $M=(\mathcal{S},\mathcal{A},\mathcal{P},\gamma, \r)$.   The IRL problem is, in general, highly underspecified, which has led researchers to consider various models for restricting the set of reward vectors under consideration.  In a seminal consideration of IMDPs and associated IRL problems, Ng and Russel \cite{andrew2000} observe that, by the optimality equations, the only reward vectors consistent with an optimal policy $\pi$ are those that satisfy the set of inequalities 
\begin{eqnarray}
(P_{\pi}-P_{a})(I_{n}-\gamma P_{\pi})^{-1}\textbf{r} &\geq &\0, \quad \forall a \in \mathcal{A},  
\label{ieq:spi}
\end{eqnarray}
where $P_{\pi}$ is the transition probability matrix relating to observed policy $\pi$, $P_{a}$ denotes the transition probability matrix for other actions, $I_{n}$ is a $n\times n$ identity matrix, and $\textbf{r}$ is a reward vector that depends only on state.  Note that the trivial solution $\textbf{r}=0$ satisfies these constraints, which highlights the underspecified nature of the problem and the need for reward selection mechanisms.   Ng and Russel \cite{andrew2000} choose the reward function to maximize the difference between the optimal and suboptimal policies, which can be done using a linear programming formulation.  In the sections that follow, we propose the idea of selecting reward on the basis of Maximum a posterior (MAP) estimation in a Bayesian framework.

\section{Bayesian IRL with Gaussian Distribution}
\label{section:convex}

Suppose that we have a prior distribution $p(\r$) for the rewards in an IMDP $M_{I}$, along with a likelihood function $p(\O|\r)$.  Then we can define the associated Bayesian IRL problem to be that of finding the MAP estimate of $\r$.   In this section we consider this problem for priors with a Gaussian distribution, showing that the MAP estimation problem can be formulated as a convex optimization problem.  We assume all the states, value functions, and transition probabilities can be stored in the memory of a computer. 

Specifically, let $\textbf{r} \in \Re^{n}$ be a random vector only depending on state. The entry $\textbf{r}(s_{i})$ denotes the reward at i-th state. We assign a Gaussian prior on the $\textbf{r}$: $\textbf{r}\sim \mathcal{N}(\mu_{r}, \Sigma_{r})$. This is a subjective distribution; before anything is known about optimal policies for the MDP, the learner has characterized a prior belief by $\mu_{r}$ with confidence by $\Sigma_{r}$. 

\begin{figure}[!tb]
	\centering
	{\includegraphics[width =3.6in]{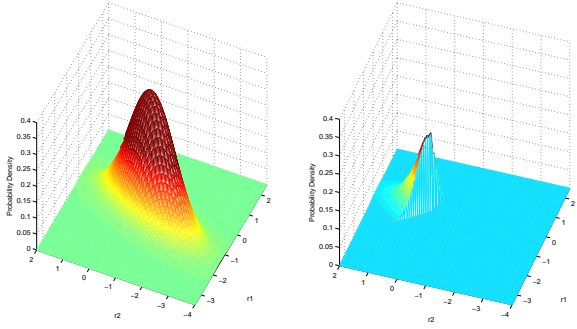}}
	\caption{An example showing the Bayesian IRL given full observation of the decision maker's policy.}
	\label{fig:gpraw}
\end{figure}


One can envision two principal types of experiments for collecting a set of observations $\mathcal{O}$:
\begin{enumerate}
	\item \textsl{Decision Mapping}: the observations are obtained by finding a mapping between state and action; e.g., we ask the expert which action he, she, or it would choose at state $s$, and then repeat the process.  Ultimately, we will have a set of independent state-action pairs, $\mathcal{O}_{1}=\left\{(s^{h},a^{h})\right\}^{t}_{h=1}$.  
	\item \textsl{Decision Trajectory}: Given an initial state, we simulate the decision problem and record the history of the expert's behavior, $\mathcal{O}_{2}=\left\{s^{1},a^{1}, s^{2}, a^{2},\cdots,s^{t},a^{t}\right\}$.
\end{enumerate}

Formally, we define an experiment $E$ to be a triple $(O, \textbf{r}, \left\{p(\mathcal{O}|\textbf{r})\right\})$, where $O$ is a random vector with probability mass function $p(\mathcal{O}|\textbf{r})$ for some $\textbf{r}$ in the function space. Given what experiment E was performed and a particular observation of $\mathcal{O}$, the experimenter is able to make inference and draw some evidence about $\textbf{r}$ arising from $E$ and $\mathcal{O}$. This evidence we denote by $Ev(E,\mathcal{O})$. Consider observations made using decision mapping $\mathcal{O}_{1}$ and decision trajectory $\mathcal{O}_{2}$, with corresponding experiments  $E_{1} = (O_{1}, \textbf{r}, \left\{p(\mathcal{O}_{1}|\textbf{r})\right\})$ and $E_{2} = (O_{2}, \textbf{r}, \left\{p(\mathcal{O}_{2}|\textbf{r})\right\})$. We would like to show that $Ev(E_{1},\mathcal{O}_{1})= Ev(E_{2},\mathcal{O}_{2})$, if the states in $\mathcal{O}_{1}$ and $\mathcal{O}_{2}$ are the same. This fact implies that inference conclusions drawn from $\mathcal{O}_{1}$ and $\mathcal{O}_{2}$ should be identical.

Making use of independence of state-action pairs in decision mapping, we calculate the joint probability density as
\begin{eqnarray}
p(\mathcal{O}_{1}|\textbf{r}) = \prod_{h=1}^{t} p(s^{h},a^{h}|\textbf{r}) = \prod_{h=1}^{t} p(s^{h})p(a^{h}|s^{h},\textbf{r}).\nonumber
\label{eq:obs1}
\end{eqnarray}
Considering Markov transition in decision trajectory, we write the joint probability density as
\begin{eqnarray}
p(\mathcal{O}_{2}|\textbf{r})= p(s^{1}) p(a^{1}|s^{1},\textbf{r})\prod_{h=2}^{t} p(s^{h}|s^{h-1},a^{h-1})p(a^{h}|s^{h},\textbf{r}).\nonumber
\label{eq:obs2}
\end{eqnarray}
Finally, we get $p(\mathcal{O}_{1}|\textbf{r}) = c(\mathcal{O}_{1}, \mathcal{O}_{2})p(\mathcal{O}_{2}|\textbf{r})$,
where $c(\mathcal{O}_{1}, \mathcal{O}_{2})$ is a constant. The above equation implies an equivalence of evidence for inference of $\textbf{r}$  between the use of  a decision map or  a decision trajectory.

To simplify computation, we eliminate the elements in likelihood function $p(\mathcal{O}|\textbf{r})$ that do not contain $\textbf{r}$, which yields $p(\mathcal{O}|\textbf{r}) = \prod_{h=1}^{t} p(a^{h}|s^{h},\textbf{r})$. Further, we model $p(a^{h}|s^{h},\textbf{r})$ by
	\begin{eqnarray}
	p(a^{h}|s^{h}, \textbf{r}) = 
	\begin{cases}
	1, {\rm if}\ Q(s^{h},a^{h}) \geq Q(s^{h},a),\ \forall a\in \mathcal{A}\\
	0, {\rm otherwise.}
	\end{cases}
	\end{eqnarray}	
This form for the likelihood function is based on the assumption each observed action is an optimal choice on the part of the expert. Note that the set of reward values that make $p(a^{h}|s^{h},\textbf{r})$ equal to one is given by Eq. \ref{ieq:spi}. 

\begin{proposition}
\label{proposition:qcpirl}
Assume a countable state and control space and a stationary policy. Then IRL using Bayesian MAP inference is a quadratic convex programming problem.
\end{proposition}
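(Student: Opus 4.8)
The plan is to write the MAP objective explicitly via Bayes' rule and show that it reduces to minimizing a convex quadratic over a polyhedron. First I would apply Bayes' rule, $p(\textbf{r}\mid\mathcal{O})\propto p(\mathcal{O}\mid\textbf{r})\,p(\textbf{r})$, and note that the MAP estimate maximizes the log-posterior $\log p(\mathcal{O}\mid\textbf{r})+\log p(\textbf{r})$, since the normalizing constant does not depend on $\textbf{r}$.

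Second, I would substitute the two pieces. The Gaussian prior $\textbf{r}\sim\mathcal{N}(\mu_{r},\Sigma_{r})$ contributes, up to an additive constant, the term $-\frac{1}{2}(\textbf{r}-\mu_{r})^{T}\Sigma_{r}^{-1}(\textbf{r}-\mu_{r})$. The likelihood $p(\mathcal{O}\mid\textbf{r})=\prod_{h}p(a^{h}\mid s^{h},\textbf{r})$ is a product of $\{0,1\}$-valued indicators and is therefore itself an indicator: it equals one exactly when every observed action is optimal at its state, and zero otherwise. Its logarithm is thus $0$ on the feasible set and $-\infty$ off it, so incorporating it into the log-posterior is equivalent to imposing the defining inequalities of that set as hard constraints.

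Third, I would invoke the earlier observation that, under a stationary policy and the Bellman optimality conditions, the rewards making every observed action optimal are precisely those obeying the linear system in Eq. \ref{ieq:spi}. The MAP problem then reads
\[
\min_{\textbf{r}}\ \frac{1}{2}(\textbf{r}-\mu_{r})^{T}\Sigma_{r}^{-1}(\textbf{r}-\mu_{r})\quad\text{s.t.}\quad (P_{\pi}-P_{a})(I_{n}-\gamma P_{\pi})^{-1}\textbf{r}\geq\0,\ \forall a\in\mathcal{A}.
\]
Finally I would verify the two defining features of a convex quadratic program: the objective is quadratic with Hessian $\Sigma_{r}^{-1}$, which is symmetric positive definite because $\Sigma_{r}$ is a nondegenerate covariance matrix, so the objective is (strictly) convex; and the constraints are linear, so the feasible set is a polyhedron, nonempty since $\textbf{r}=\0$ always satisfies them.

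The step I expect to demand the most care is the second one: making rigorous the passage from the improper, indicator-valued likelihood to a hard-constrained program, and checking the technical preconditions that make the constraint well-posed, namely that $I_{n}-\gamma P_{\pi}$ is invertible (guaranteed since $P_{\pi}$ is stochastic and $0<\gamma<1$, so $\|\gamma P_{\pi}\|<1$) and that $\Sigma_{r}$ is positive definite so that $\Sigma_{r}^{-1}$ exists. A secondary point worth addressing is the countable-but-infinite case, in which $\textbf{r}$ and the constraint system become infinite-dimensional; here I would appeal to the ``storable in memory'' assumption of this section, which restricts attention to the finite-dimensional setting where the problem is literally a finite quadratic program.
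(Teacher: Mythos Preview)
Your proposal is correct and follows essentially the same route as the paper: apply Bayes' rule, observe that the $\{0,1\}$-valued likelihood converts to the linear feasibility constraints of Eq.~\ref{ieq:spi}, and conclude that MAP estimation is minimization of the Gaussian negative log-prior (a convex quadratic) over a polyhedron. Your treatment is in fact more careful than the paper's terse argument---you make explicit why the indicator likelihood becomes a hard constraint and you verify the invertibility and positive-definiteness preconditions---while the paper additionally imposes box constraints $\textbf{r}_{\min}<\textbf{r}<\textbf{r}_{\max}$ that you omit, though these are affine and do not affect the conclusion.
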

\begin{proof}
By Bayes rule, the posterior distribution of reward
\begin{eqnarray}
p(\textbf{r}|\mathcal{O}) = \frac{1}{(2\pi)^{n/2}|\Sigma_{r}|^{1/2}}\exp\left(-\frac{1}{2}(\textbf{r}-\mu_{r})^{T}\Sigma_{r}^{-1}(\textbf{r}-\mu_{r})\right).\nonumber 
\end{eqnarray}
This posterior probability $p(\textbf{r}|\mathcal{O})$ quantifies the evidence that $\textbf{r}$ is the reward for the observations in $\mathcal{O}$.  Using Eq. \ref{ieq:spi}, we formulate the IRL problem as
\begin{eqnarray}
&&\min\limits_{r}\ \frac{1}{2}(\textbf{r}-\mu_{r})^{T}\Sigma_{r}^{-1}(\textbf{r}-\mu_{r}) \nonumber\\
&&{\rm s.t.}\ \ (P_{a^{*}}-P_{a})(I_{n}-\gamma P_{a^{*}})^{-1}\textbf{r}> 0, \ \forall a \in \mathcal{A} \\
&&\ \ \ \ \ \textbf{r}_{\rm min}<\textbf{r}<\textbf{r}_{\rm max}\nonumber
\label{prob:cpirl}
\end{eqnarray}
Since the objective is convex quadratic and constraints are affine, Problem 3 is a convex quadratic program. 
\end{proof}

Fig. \ref{fig:gpraw} shows a Gaussian prior on reward and its posterior after truncation by accounting for the linear constraints on reward implied by observation $\mathcal{O}$.  Note the shift in mode.   

The development above assumes the availability of a complete set of observations, giving the optimal action at every state. If necessary, it may be possible to expand  observations of partial policies to fit the framework.  A naive approach would be to state transition probabilities averaged over all possible actions at unobserved states.

\section{Gaussian processes for generalized IRL}
\label{section:gpmodel}

 In this section, we introduce a Gaussian process IRL model.  Our model involves the construction of a preference graph, defined below, that is used to record the actions of the expert under observation.  The choice of one action over the others at any given state will be governed by Q-function values, if the expert acts optimally.  Hence, these values may be used to define preference relations among actions.

\textsl{Definition 1} At state $s_{i}\in \mathcal{S}$, $\forall \hat{a}, \check{a} \in \mathcal{A}$, we define the \textsl{preference relation} as: if $Q(s_{i},\hat{a})\geq Q(s_{i},\check{a})$, the action $\hat{a}$ is weakly preferred to $\check{a}$, denoted as $\hat{a}\succeq_{s_{i}} \check{a}$; strictly preferred, denoted as $\hat{a}\succ_{s_{i}} \check{a}$,if and only if $Q(s_{i},\hat{a})> Q(s_{i},\check{a})$; $\hat{a}$ is equivalent to $\check{a}$, denoted as $\hat{a}\sim_{s_{i}} \check{a}$, if and only if $\hat{a}\succeq_{s_{i}} \check{a}$ and $\check{a}\succeq_{s_{i}} \hat{a}$.

\textsl{Definition 2} A \textsl{preference graph} over action space is a directed graph showing preference relations among the countable actions at a given state. At state $s_{i}$, a preference graph $\epsilon_{i}$ consists of the node set $\mathcal{V}_{i}$ and edge set $\mathcal{E}_{i}$. Each node represents an action in $\mathcal{A}$. Define a one-to-one mapping $\varphi: \mathcal{V}_{i}\rightarrow\mathcal{A}$. Each edge indicates the preference relation between two nodes.

Suppose we are given a dataset of observations, denoted as $\mathcal{O} =\left\{\mathcal{S}, \mathcal{G}\right\}=\left\{s_{i}, \epsilon_{i}\right\}^{\hat{n}}_{i=1}$. Each pair $(s_{i}, \epsilon_{i})$ consists of two components: one is the input $s_{i}$ that is a feature vector constructed by a mapping $\phi:\mathcal{S}\rightarrow[0,1]^{d}$; the other, denoted as $\epsilon_{i}=(\mathcal{V}_{i},\mathcal{E}_{i})$, is a two layer preference graph over actions observed at $s_{i}$. As shown in Figure \ref{fig:preferencegraph}, the node set $\mathcal{V}_{i}$ can be divided into two subsets: a set of nodes in the top layer to represent optimal actions, denoted as $\mathcal{V}^{+}_{i}$; a set of nodes in the bottom layer to represent other actions, denoted as $\mathcal{V}^{-}_{i}$. The graph $\epsilon_{i} = \left\{(u \rightarrow v)_{l=1}^{n_{i}}, u\in \mathcal{V}^{+}_{i},\ v\in \mathcal{V}^{-}_{i} \right\}\cup \left\{(u\leftrightarrow v)_{k=1}^{m_{i}},\ u,v\in \mathcal{V}^{+}_{i}\right\}$, where $n_{i}$ is the number of edges denoting strict preference relations and $m_{i}$ is the number of edges denoting equivalent relations.
\begin{figure}[h]
	\centering
		\subfigure[]{\includegraphics[width =1.3in]{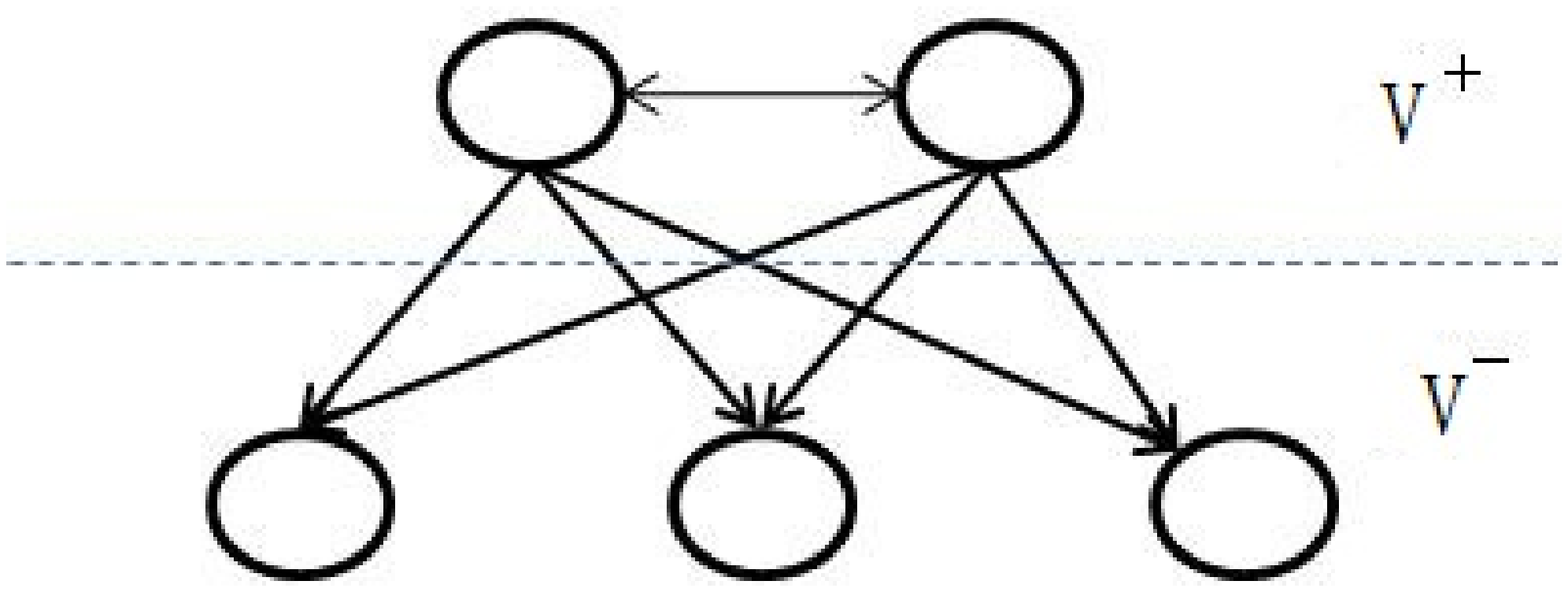}}
		\hfill
		\subfigure[]{\includegraphics[width =1.3in]{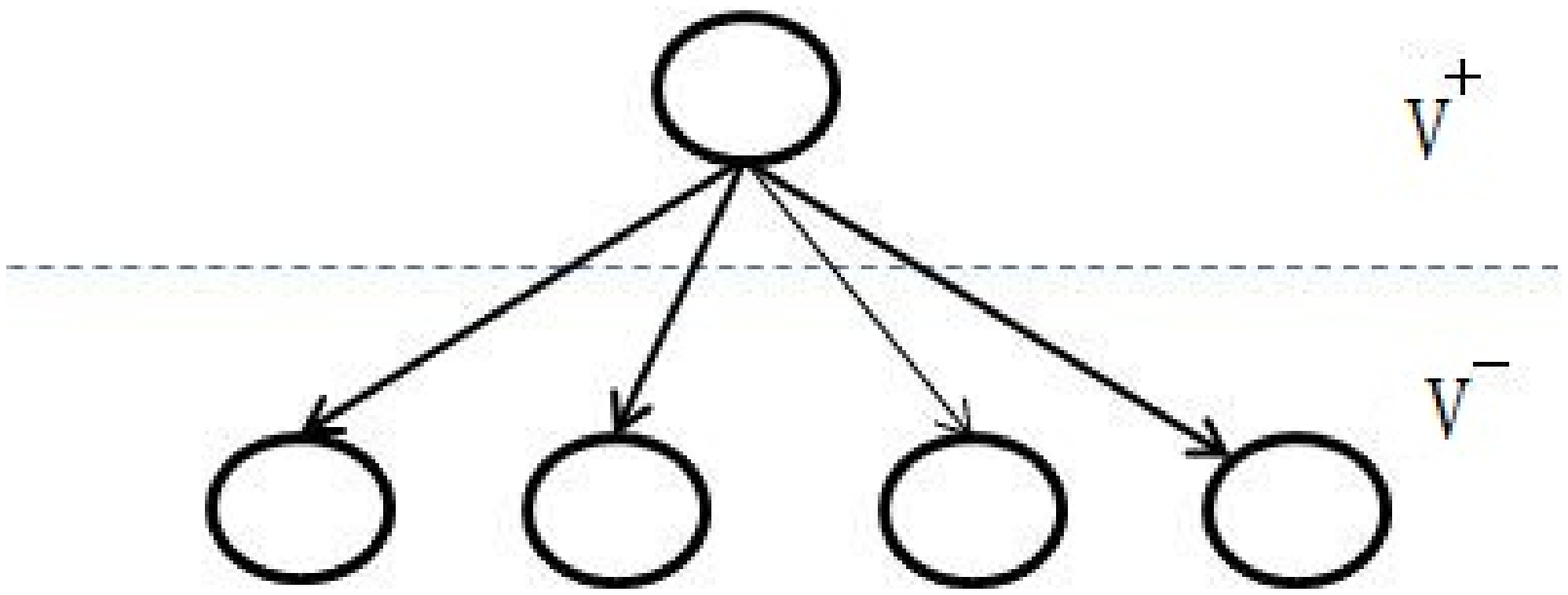}}
	\caption{Examples of preference graph}
	\label{fig:preferencegraph}
\end{figure}
Consider action's influence on the reward function. Here we define $\textbf{r}$ as follows.
\begin{eqnarray}
\textbf{r}&=&(\underbrace{\textbf{r}_{a_{1}}(s_{1}),...,\textbf{r}_{a_{1}}(s_{\hat{n}})}, \ldots ,\underbrace{\textbf{r}_{a_{m}}(s_{1}), \ldots ,\textbf{r}_{a_{m}}(s_{\hat{n}})})\nonumber\\
&=& (\ \ \ \ \ \ \ \ \ \ \ \textbf{r}_{a_{1}},\ \ \ \ \ \ \ \cdots,\ \ \ \ \ \ \ \ \ \ \ \ \ \textbf{r}_{a_{m}})
\label{defonf}
\end{eqnarray}
 where $\textbf{r}_{a_{j}}, \forall j\in\left\{1,2,\cdots,m\right\}$, denotes the reward only associated with j-th action. Given $\textbf{r}$, a ranking function can be naturally formulated as arrangement of the nodes in sorting of the values of Q-functions. We write the ranking function with respect to a node $u$ at state $s$ as $Q(s,\varphi(u))$. 

\subsection{Bayesian inference}
Below we describe our models for prior information, likelihood functions, and inference.

\subsubsection{Gaussian prior}
\label{prior}
Consider $\textbf{r}_{a_{j}}$ as a stochastic process. Then $\textbf{r}_{a_{j}}$ is a Gaussian process if, for any $\left\{s_{1},\cdots,s_{\hat{n}}\right\} \in \mathcal {S}$, the random variables $\left\{\textbf{r}_{a_{j}}(s_{1}), \cdots, \textbf{r}_{a_{j}}(s_{\hat{n}})\right\}$ are normally distributed. We denote by $k_{a_{j}}(s_{c}, s_{d})$ the function generating the value of entry $(c,d)$ for covariance matrix $K_{a_{j}}$, which leads to $\textbf{r}_{a_{j}}\sim N(0,K_{a_{j}})$. 
Then the joint prior probability of the reward is a product of multivariate Gaussian, namely 
$p(\textbf{r}|\mathcal{S})=\prod\nolimits^{m}_{j=1}p(\textbf{r}_{a_{j}}|\mathcal{S})\,\ {\rm and}\ \textbf{r}\sim N(0, K)$. Thus $\textbf{r}$ is completely specified by the positive definite covariance matrix $K$. As we assume the $m$ latent processes are uncorrelated, the covariance matrix $K$ is block diagonal in the covariance matrices $\left\{K_{1},...,K_{m}\right\}$. In practice, we use  a squared exponential kernel function, written as $k_{a_{j}}(s_{c},s_{d}) = e^{\frac{1}{2}(s_{c}-s_{d})^{T}M_{a}(s_{c}-s_{d})}+\sigma^{2}_{a_{j}}\delta(s_{c},s_{d})$
where  $M_{a_{j}}=\kappa_{a_{j}} I_{\hat{n}}$ and $I_{\hat{n}}$ is an identity matrix of size $\hat{n}$. The function $\delta(.)$ is the Kronecker delta. 

\subsubsection{Likelihood}
\label{likelihood}
Given an edge $u \rightarrow v$, we adopt a variant of the likelihood function proposed by Chu and Ghahramani in \cite{chu2005} to capture the preference relation in that edge. Specifically, 
\begin{eqnarray}
p_{\rm ideal}(u\rightarrow v|\textbf{r}_{\varphi(u)}(s), \textbf{r}_{\varphi(v)}(s)) \nonumber\\
= 
\begin{cases}
1\ \ if\ Q(s, \varphi(u))> Q(s, \varphi(v))\\
0\ \ {\rm otherwise,}
\end{cases}
\end{eqnarray}
where $u$ and $v$ are two nodes in the preference graph.  By Definition 2, these nodes can be mapped to two actions $\varphi(u)$ and $\varphi(v)$ in space $\mathcal{A}$. We write the Q-function as,
\begin{eqnarray}
Q(s,a)  = \textbf{r}_{a}(s)+\gamma \hat{P}_{as}(I_{\hat{n}}-\gamma \hat{P}_{a^{*}})^{-1}\hat{I} \textbf{r}
\end{eqnarray} 
where $\hat{P}_{as}$ and $\hat{P}_{a^{*}}$ are transition probabilities for the observed $\hat{n}$ states, and $\hat{I}$ is a matrix with $\hat{n}$ rows and $\hat{n}\times m$ columns. The production of $\hat{I}$ and $\textbf{r}$ is a $\hat{n}\times 1$ vector containing the reward for taking the optimal action at each state. After assuming that the latent functions are contaminated with Gaussian noise that has zero mean and unknown variance $\sigma^{2}$ \cite{chu2005}, the likelihood function for $l$-th strict preference edge in graph $\epsilon_{i}$ becomes 
\begin{eqnarray}
&&p(u_{l}\rightarrow v_{l})|\textbf{r}_{\varphi(u_{l})}(s_{i})+\delta_{u_{l}},\textbf{r}_{\varphi(v_{l})}(s_{i})+\delta_{v_{l}}) \nonumber\\
&&= \int\int p_{\rm ideal}(\varphi(u_{l})\succ \varphi(v_{l})|\textbf{r}_{\varphi(u_{l})}(s_{i}),\textbf{r}_{\varphi(v_{l})}(s_{i}))\nonumber\\
&&\mathbf{N}(\delta_{u}, 0, \sigma^{2})\mathbf{N}(\delta_{v}, 0, \sigma^{2})d\delta_{u}d\delta_{v}=\Phi(z^{l}_{i})
\end{eqnarray}
where $z^{l}_{i}=\frac{Q(s_{i},\varphi(u_{l}))-Q(s_{i},\varphi(v_{l}))}{\sqrt{2}\sigma}$, $\mathbf{N}(\delta_{u}, 0, \sigma^{2})$ denotes a Gaussian distribution for $\delta_{u}$, and $\Phi(z) = \int^{z}_{-\infty}\mathbf{N}(\gamma,0,1)d\gamma$. The l-th edge $(u_{l}\rightarrow v_{l})$ in preference graph $\epsilon_{i}$ denotes the strict preference relation $\varphi(u_{l}) \succ \varphi(v_{l})$. Consequently, we have $p(\varphi(u_{l})\succ_{s_{i}} \varphi(v_{l})|\textbf{r}) = \Phi(z^{l}_{i})$. With a two-layer preference graph, we are only interested in the directed edges between two layers as well as the equivalent relation in the top layer. We propose a new likelihood function for the $k$-th equivalent preference edge as follows,
\begin{eqnarray}
p(u_{k}\leftrightarrow v_{k} |\textbf{r})\propto e^{-\frac{1}{2}(Q(s_{i},\varphi(u_{k}))-Q(s_{i},\varphi(v_{k})))^{2}}
\label{eq:likelihoodeqal}
\end{eqnarray}
where $u_{k},v_{k}\in V^{+}$ and the k-th edge $(u_{k} \leftrightarrow v_{k})$ denotes the equivalent relation $\varphi(u_{k})\sim_{s_{i}} \varphi(v_{k})$. We have $p(\varphi(u_{k})\sim_{s_{i}} \varphi(v_{k})|\textbf{r}) = p(u_{k}\leftrightarrow v_{k} |\textbf{r})$ that is shown in Eq.\ref{eq:likelihoodeqal}.
Then we compute the likelihood function for all observed preference graphs using the following equation, 
\begin{eqnarray}
p({\mathcal{G}}|\mathcal{S},\textbf{r}, \theta) = \prod\limits^{\hat{n}}_{i=1} p(\epsilon_{i}|s_{i},\textbf{r})= \prod\limits^{\hat{n}}_{i=1}\prod\limits^{n_{i}}_{l=1}\Phi(z^{l}_{i})\nonumber\\
\exp(\sum_{i=1}^{\hat{n}}\sum_{k=1}^{m_{i}} -\frac{1}{2} (Q(s_{i},\varphi(u_{k}))-Q(s_{i},\varphi(v_{k})))^{2}). 
\end{eqnarray}

We put all the unknown parameters into a hyper-parameter vector $\theta = \left\{\kappa_{a_{j}}, \sigma_{a_{j}}, \sigma\right\}$, and then adjust the hyper-parameters on the basis of  maximum a posterior estimation.

\subsubsection{Posterior inference}
Here we adopt a hierarchical model. At the lowest level are function values encoded as a parameter vector $\textbf{r}$. At the top level are hyper-parameters in $\theta$ controlling the distribution of the parameters at the bottom level. Inference takes place one level at a time. At the bottom level, the posterior over function values are given by Bayes' rule as $p(\textbf{r}|\mathcal{S},\mathcal{G}, \theta)= p(\mathcal{G}|\mathcal{S},\theta,\textbf{r})p(\textbf{r}|\mathcal{S},\theta)/p(\mathcal{G}|\mathcal{S},\theta)$.

The posterior combines the information from the prior and the data, which reflects the updated belief about $\textbf{r}$ after observing the decisions. By Eq. \ref{prob:cpirl}, our task is to minimize the negative log posterior equation $U(\textbf{r})$, which is
\begin{eqnarray}
U(\textbf{r}) &=& \frac{1}{2}\sum\limits^{m}_{j=1}\textbf{r}_{a_{j}}^{T}K^{-1}_{a_{j}}\textbf{r}_{a_{j}} + \sum_{i=1}^{\hat{n}}\sum_{k=1}^{m_{i}}\frac{1}{2}(\sum^{m}_{j=1}\rho_{a_{j}}^{ik}\textbf{r}_{a_{j}})^{2}\nonumber\\
&-& \sum^{\hat{n}}_{i=1}\sum^{n_{i}}_{l=1}\ln \Phi(z_{i}^{l}).
\label{marginopt}
\end{eqnarray}
Given the k-th equivalent relation $\varphi(u_{k})\sim\varphi(v_{k})$, let $\mathbf{\Delta}_{k} \triangleq \gamma(\hat{P}_{\phi(u_{k})s_{i}}-\hat{P}_{\phi(v_{k})s_{i}})(I_{\hat{n}}-\gamma \hat{P}_{a^{*}})^{-1}$, then we have
\begin{eqnarray}
\rho^{ik}_{a_{j}} = \mathbf{\alpha}_{i}[\mathbf{1}(a_{j}=\phi(u_{k}))-\mathbf{1}(a_{j}=\phi(v_{k}))]+\mathbf{\Delta}_{k}\hat{I}_{a_{j}}\nonumber
\end{eqnarray}
where $\hat{I}_{a_{j}}$ is a block matrix of $\hat{I} = [\hat{I}_{a_{1}},\hat{I}_{a_{2}},\cdots,\hat{I}_{a_{m}}]$ and $\mathbf{\alpha}_{i}$ is a $1\times \hat{n}$ vector whose entry $\mathbf{\alpha}_{i}(i) = 1$, and $ \mathbf{\alpha}_{i}(-i) = 0$. The notation $\mathbf{1}(.)$ is an indicator function.
\begin{remark}
Minimizing Eq.\ref{marginopt} is a convex optimization problem.  The proof can be found in our supplemental report  \cite{qifeng2010}.
\end{remark}
At the minimum of $U(\textbf{r})$ we have
\begin{eqnarray}
\frac{\partial U}{\partial \textbf{r}_{a_{j}}}= 0 \Rightarrow \hat{\textbf{r}}_{a_{j}}=K_{a_{j}}(\nabla \log P(\mathcal{G}|\mathcal{S},\hat{\textbf{r}}, \theta)),
\label{mapf}
\end{eqnarray}
where $\hat{\textbf{r}} = (\hat{\textbf{r}}_{1},\cdots, \hat{\textbf{r}}_{a_{j}},\cdots,\hat{\textbf{r}}_{m})$. In Eq.\ref{mapf}, we can use Newton's method to find the maximum of $U$ with the iteration
\begin{eqnarray}
\textbf{r}_{a_{j}}^{{\rm new}} = \textbf{r}_{a_{j}} - (\frac{\partial^{2}U}{\partial \textbf{r}_{a_{j}} \partial \textbf{r}_{a_{j}}})^{-1}\frac{\partial U}{\partial \textbf{r}_{a_{j}}}.\nonumber
\end{eqnarray}

\subsection{Model selection}
\label{modelselection}
Model selection is the process of choosing a covariance function for a Gaussian process. The process can be considered to be training of a Gaussian process \cite{carl2006}. At the top level, we can optimize the hyper-parameters by maximizing the posterior over these hyper-parameters. The posterior $p(\theta|\mathcal{G}, \mathcal{S})$ is given by $p(\theta|\mathcal{G}, \mathcal{S}) = p(\mathcal{G}|\mathcal{S}, \theta)p(\theta)/p(\mathcal{G}|\mathcal{S})$, where the normalizing constant can be omitted for simplifying the optimization problem. If the prior distribution of hyper-parameters has no population basis, we assign the non-informative prior density to $\theta$. Optimization over $\theta$ becomes the problem of maximizing the marginal likelihood $p(\mathcal{G}|\mathcal{S}, \theta)$. We approximate the integral
of the marginal likelihood $p(\mathcal{G}|\mathcal{S}, \theta)$ using a Laplace approximation local expansion around the maximum, which is written as
\begin{eqnarray}
p(\mathcal{G}|\mathcal{S},\theta) \approx p(\mathcal{G}|\mathcal{S},\hat{\textbf{r}},\theta)\times p(\hat{\textbf{r}}|\theta)\delta_{\textbf{r}|\mathcal{S}}.
\label{laplace}
\end{eqnarray}
where $\delta_{\textbf{r}|\mathcal{S}}=|-\nabla\nabla\ln P(\textbf{r}|\mathcal{G},\mathcal{S},\theta)|^{-\frac{1}{2}}$ is the posterior uncertainty in $\textbf{r}$, which is known as the Occam factor, automatically incorporating a trade-off between model fit and model complexity. As the number of data increases, the approximation is expected to become increasingly accurate. The marginal likelihood can be further written as
\begin{eqnarray}
&&\log p(\mathcal{G}|\mathcal{S},\theta) = -U(\hat{\textbf{r}})-\frac{1}{2}\log\left|I_{\hat{n}}+K\Pi\right|, 
\label{log_hyper}
\end{eqnarray}
where $\hat{\textbf{r}}$ is the MAP estimation in Eq.\ref{mapf} and $\Pi$ is the second derivative matrix of the sum of the second and third part in Eq. \ref{marginopt}. Now we can find the optimal hyper-parameters by maximizing Eq.\ref{log_hyper}. 

\subsection{Posterior predictive reward}
When the observed state-action pairs are limited, e.g. in the large state space or infinite state space, how to predict the reward at new state is desirable. Our IRL with Gaussian process provides a probabilistic model to predict reward on new coming state $s^{*}$, which is a Gaussian model $p(\textbf{r}^{*}|\mathcal{G},\mathcal{S},s^{*},\theta)$ with the following mean function
\begin{eqnarray}
E(\textbf{r}_{a_{j}}^{*}|\mathcal{G}, \mathcal{S}, s^{*},\theta)  
&=& k_{a_{j}}(S, s^{*})^{T}(K_{a_{j}}+\sigma^{2}I_{\hat{n}})^{-1}\hat{\textbf{r}}_{a_{j}}\nonumber
\end{eqnarray}
and covariance function
\begin{eqnarray}
&&{\rm cov}(\textbf{r}_{a_{j}}^{*}|\mathcal{G}, \mathcal{S}, s^{*},\theta) = k_{a_{j}}(s^{*}, s^{*})\nonumber\\
&&- k_{a_{j}}(S, s^{*})^{T}(K_{a_{j}}+\sigma^{2}I_{\hat{n}})^{-1}k_{a_{j}}(S, s^{*}),\nonumber
\label{eq:posterior_predict}
\end{eqnarray}
where $k_{a_{j}}(S,s^{*})$ is the vector of covariance between the test point and training points for the covariance function relating to the action $a_{j}\in\mathcal{A}$.

\section{Experiments}
\label{section:experiments}
\begin{figure}[!t]
\centering
\subfigure[GPIRL accuracy]{\includegraphics[width=1.65in]{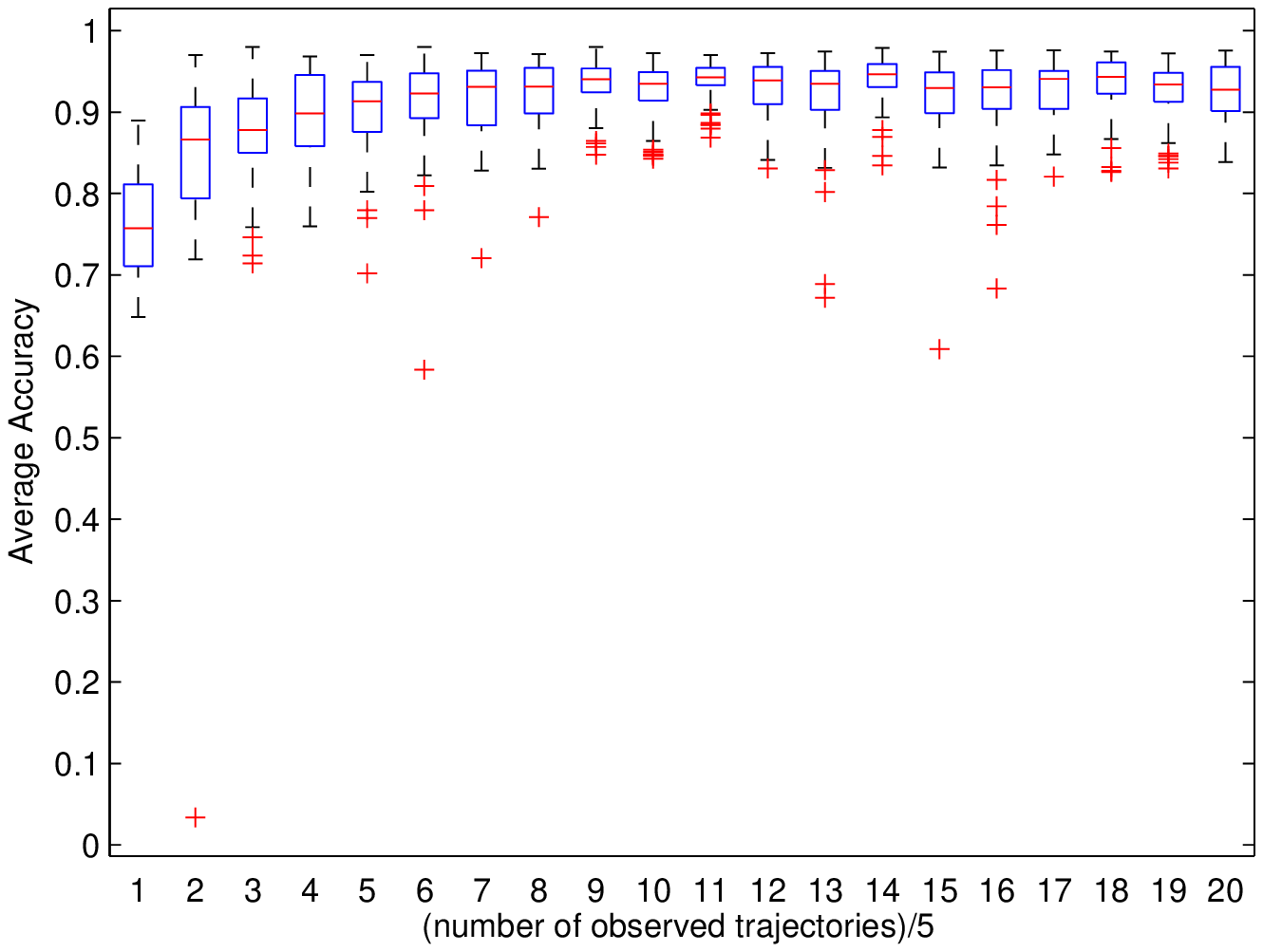}}
\hfill
\subfigure[LIRL accuracy]{\includegraphics[width=1.65in]{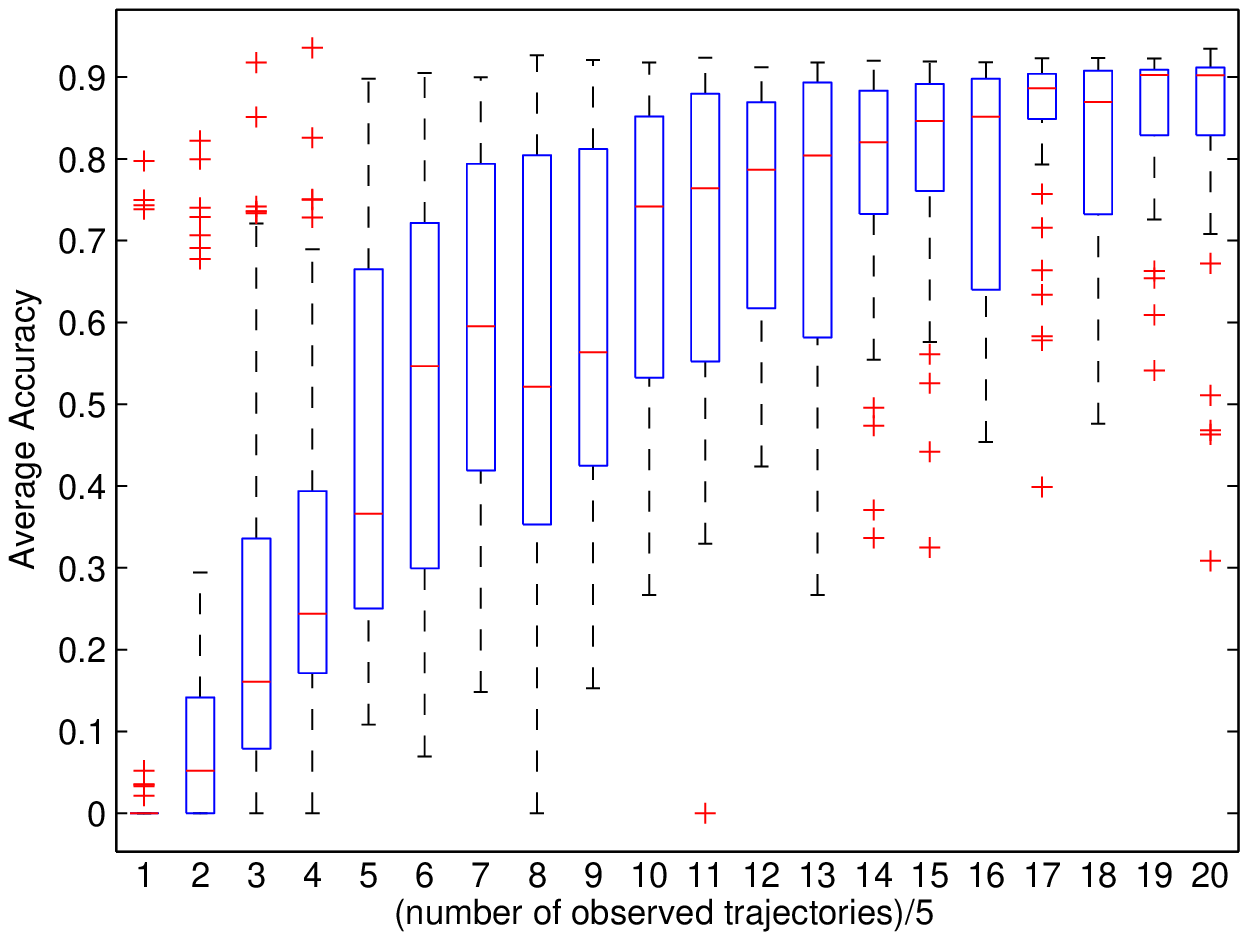}}
\caption{Average accuracy as a function of the number of observed decision trajectories, for GridWorld experiments.}
\label{fig_accAlongSamples}
\end{figure}
In this section, we report on a simple GridWorld experiment in which an agent starts from the a square of the grid and attempts to navigate to the goal square, with the possibility of encountering obstacles that block movement to certain squares. The agent is able to take five actions: remaining in the current square or moving in one of the four cardinal directions. Each movement action results in movement in the intended direction with probability 0.65, movement in an unintended direction with probability 0.2, and failure to move with probability 0.15. 

We compared three algorithms:  our convex programming method from Section \ref{section:convex} (\textsl{CPIRL}), our Gaussian process method from  Section \ref{section:gpmodel} (\textsl{GPIRL}), and the linear approximation method in \cite{andrew2000} (\textsl{LIRL}). Given observation of a complete policy, each of the algorithms was successful in finding a reward vector that yields an optimal policy identical to that observed. For each of the reward vectors returned by the algorithms, we recorded the amount of computation time needed  to find a best policy using reinforcement learning.  Table \ref{gridtime} shows the average of these time over 50 simulations.  Notably, reinforcement learning converges more quickly with reward vectors returned by CPIRL and GPIRL than with those returned by LIRL.  We hypothesize our methods tend to shape reward,  providing additional feedback to the agent and leading to an improvement in learning rate. 

Fig. \ref{fig_accAlongSamples} provides the basis for an accuracy comparison of \textsl{GPIRL} and \textsl{LIRL} for experiments in which only partial observations were available for reward learning.  Accuracy is calculated to be the fraction of runs in which the apprentice is able to achieve the teacher's goal state. The process of computing accuracy includes: 1) generating some GridWorld problems and sampling the decision trajectories from the teacher's demonstration; 2) inferring the reward function using GPIRL and LIRL;  3) generating 1000 new GridWorld problems with random initial state and solving these problems by applying reinforcement learning using the reward output by IRL;  4) comparing the results of the GPIRL and LIRL apprentices with the teacher.  If the apprentice reaches the teacher's goal state, we consider that trial a success for the apprentice.  As can be seen in Fig. \ref{fig_accAlongSamples}, the accuracy of GPIRL is higher than that of LIRL, especially when the number of observations is small.  Additionally,  GPIRL has clearly lower variance in accuracy. 
\begin{center}
\begin{threeparttable}[!t]
\caption{Time(sec) to find the apprentice policy}
\label{gridtimetable}
\begin{tabular}{llll}
\multicolumn{1}{c}{GridWorld Size}  &\multicolumn{1}{c}{LIRL} &\multicolumn{1}{c}{CPIRL} &\multicolumn{1}{c}{GPIRL}
\\ \hline \\
10x10         & 2.61 & 2.06 & 1.20\\
20x20					&	20.05		&	15.75		&9.32\\
30x30					& 75.12    &  64.30   &35.11\\
\hline
\end{tabular}
\label{gridtime}
\end{threeparttable}
\end{center}
\begin{figure}[!t]
\centering
\subfigure[60-state discretization]{\includegraphics[width=1.6in]{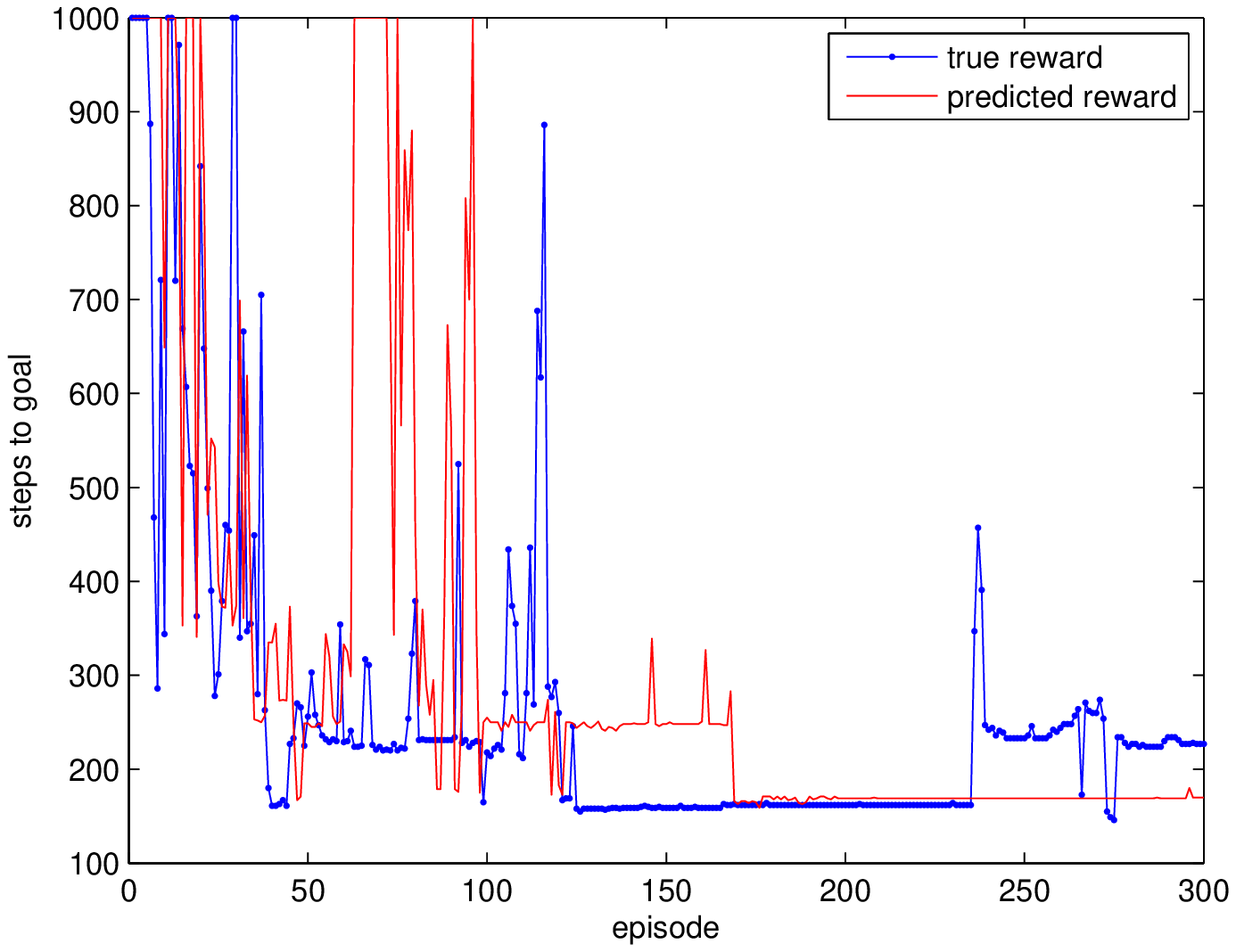}}
\subfigure[120-state discretization]{\includegraphics[width=1.6in]{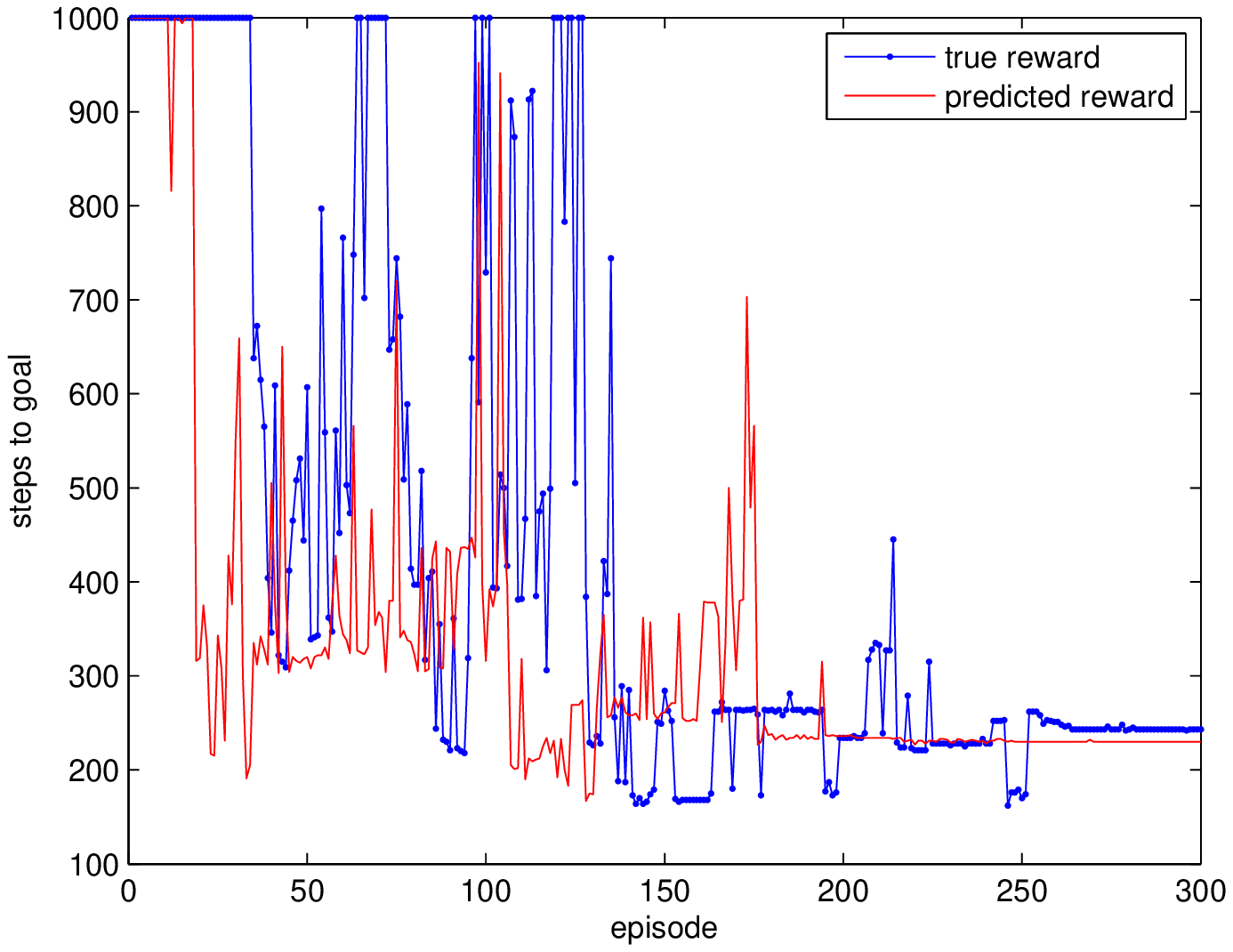}}
\caption{Solutions to the hill climbing problem based off true reward (blue) and reward recovered from GPIRL (red), for two levels of discretization.}
\label{figure:mntcar}
\end{figure}

We also performed an experiment based on a  simulation of an under-powered car attempting to drive out of a U-shaped valley. In this simulation, the car lacks enough power to climb the valley slopes from a standstill. Instead, it must first reverse up a slope in order to accumulate energy that will help it rush up the opposite slope. We choose the car's position and velocity as state features, discretizing those naturally continuous quantities.  To test GPIRL's ability to predict the reward on unseen states, we sampled only half the  discretized states as the observation data for GPIRL.  Given a state space with 120 states, for example, we would observe behavior in only 60 states.  Figure \ref{figure:mntcar} shows the number of steps needed to escape the valley for a range of starting conditions, or episodes, for policies learned from the true reward (blue) and from the reward returned by GPIRL (red). The results in the figure suggest that GPIRL is able to effectively recover the reward with incomplete observations, since the solver, using the reward predicted by GPIRL, has a  performance on par with that of the teacher, using true reward.

\section{Conclusions}
\label{section:conclusion}
We propose new IRL algorithms in the domain of convex programming. To deal with the IRL problems with ill-posed nature in large (or even infinite) state space, we model the reward using Gaussian process and interpret the observation of state-action space using preference graphs. Our posterior prediction method can estimate the reward at unobserved new coming states, which is promising for problems with large state space. Numerical experiments suggest that our method is able to find the reward approaching the true underlying reward with fewer observations than are needed with standard approaches. 
We will continue our research on IRL with Gaussian process in continuous space.
\section*{Acknowledgment}
This material is based upon work supported by the National Science Foundation under Grant No. EEC-0827153. 
\bibliographystyle{unsrt}
\bibliography{rls}

\begin{thebibliography}{10}

\bibitem{ratliff2009}
Nathan Ratliff, Brian Ziebart, Kevin Peterson, J.~Andrew Bagnell, Martial
  Hebert, Anind~K. Dey, and Siddhartha Srinivasa.
\newblock Inverse optimal heuristic control for imitation learning.
\newblock In {\em Proc. AISTATS}, pages 424--431, 2009.

\bibitem{andrew2000}
Andrew~Y. Ng and Stuart Russell.
\newblock Algorithms for inverse reinforcement learning.
\newblock In {\em Proc. 17th International Conf. on Machine Learning}, pages
  663--670. Morgan Kaufmann, 2000.

\bibitem{abbeel2004}
Peter Abbeel and Andrew~Y. Ng.
\newblock Apprenticeship learning via inverse reinforcement learning.
\newblock In {\em Proc. 21st International Conf. on Machine learning}, page~1.
  ACM, 2004.

\bibitem{umar2008a}
Umar Syed, Michael Bowling, and Robert~E. Schapire.
\newblock Apprenticeship learning using linear programming.
\newblock In {\em Proc. 25th international Conf. on Machine learning}, pages
  1032--1039. ACM, 2008.

\bibitem{neu2007}
Gergely Neu and Csaba Szepesvari.
\newblock Apprenticeship learning using inverse reinforcement learning and
  gradient methods.
\newblock In {\em Proc. Uncertainty in Artificial Intelligence}, 2007.

\bibitem{deepak2007}
Ramachandran Deepak and Amir Eyal.
\newblock Bayesian inverse reinforcement learning.
\newblock In {\em Proc. 20th International Joint Conf. on Artificial
  Intelligence}, 2007.

\bibitem{umar2008b}
Umar Syed and Robert~E. Schapire.
\newblock A game-theoretic approach to apprenticeship learning.
\newblock In {\em Advances in Neural Information Processing Systems}, pages
  1449--1456. MIT Press, 2008.

\bibitem{dvijotham2010}
Krishnamurthy Dvijotham and Emanuel Todorov.
\newblock Inverse optimal control with linearly-solvable mdps.
\newblock In {\em Proc. 27th International Conf. on Machine learning}. ACM,
  2010.

\bibitem{Furnkranz2005}
J.~F$\ddot{u}$rnkranz and E.~H$\ddot{u}$llermeier.
\newblock Preference learning.
\newblock In {\em K$\ddot{u}$nstliche Intelligenz}, 2005.

\bibitem{chu2005}
Chu Wei and Ghahramani Zoubin.
\newblock Preference learning with gaussian processes.
\newblock In {\em Proc. 22th Iinternational Conf. on Machine learning}, pages
  137--144. ACM, 2005.

\bibitem{bellman1957}
Bellman R.
\newblock {\em Dynamic programming}.
\newblock Princeton University Press, 1957.

\bibitem{qifeng2010}
Qifeng Qiao and Peter~A. Beling.
\newblock Inverse reinforcement learning with gaussian process (supplemental
  materials), 2010.
\newblock http://people.virginia.edu/$\sim$qq2r/IRLACCsupplements.pdf.

\bibitem{carl2006}
Carl~Edward Rasmussen and Christopher K.I.Williams.
\newblock {\em Gaussian Processes for Machine Learning}.
\newblock MIT Press, 2006.

\end{thebibliography}
\end{document}